\documentclass{article}
\usepackage[T1]{fontenc}
\usepackage{ijcai26}

\usepackage{times}
\usepackage{soul}
\usepackage{url}
\usepackage[hidelinks]{hyperref}
\usepackage[utf8]{inputenc}
\usepackage[small]{caption}
\usepackage{graphicx}
\usepackage{amsmath}
\usepackage{amsthm}
\usepackage{booktabs}
\usepackage{algorithm}
\usepackage{algorithmic}
\usepackage[switch]{lineno}
\usepackage{paralist}
\usepackage{amssymb}
\usepackage{tikz}
\usepackage{amsmath}
\usepackage{listings}
\usepackage{multirow}
\usetikzlibrary{positioning, arrows.meta, calc, quotes, bending}

\floatstyle{ruled}
\newfloat{listing}{tb}{lst}{}
\floatname{listing}{Listing}

\newtheorem{example}{Example}
\newtheorem{theorem}{Theorem}

\title{Automated Approach for Solving Infinite-state Polynomial Reachability Games}

\author{
Krishnendu Chatterjee$^1$
\and
Ehsan Kafshdar Goharshady$^1$\and
Mehrdad Karrabi$^{1}$\and\\
Maximilian Seeliger$^2$\And
\DJ or\dj e Žikelić$^3$
\affiliations
$^1$ Institute of Science and Technology Austria (ISTA)\\
$^2$ ETH Zurich\\
$^3$ Singapore Management University\\
\emails
\{krishnendu.chatterjee, ehsan.goharshady, mehrdad.karrabi\}@ist.ac.at,
mseeliger@ethz.ch,
dzikelic@smu.edu.sg
}

\newcommand{\vars}{\mathcal{V}}
\newcommand{\init}{\mathit{init}}
\newcommand{\states}{\mathcal{S}}
\newcommand{\rstates}{\states_\mathit{Reach}}
\newcommand{\sstates}{\states_\mathit{Safe}}
\newcommand{\R}{\mathbb{R}}
\newcommand{\target}{\mathcal{O}}
\newcommand{\REACH}{\texttt{REACH}}
\newcommand{\SAFE}{\texttt{SAFE}}

\newcommand{\labels}{\mathcal{L}}
\newcommand{\rlabels}{\labels_\mathit{Reach}}
\newcommand{\slabels}{\labels_{\mathit{Safe}}}

\newcommand{\val}{\mathbf{x}}
\newcommand{\valinit}{\val_\init}
\newcommand{\labelinit}{l_\init}

\newcommand{\game}{\mathcal{G}}
\newcommand{\N}{\mathbb{N}}
\renewcommand{\succ}{\mathit{succ}}
\newcommand{\rstrat}{\sigma_{\mathit{Reach}}}
\newcommand{\trans}{\mapsto}
\newcommand{\transition}{\tau}
\newcommand{\connected}{accessible\ }
\newcommand{\update}{U}
\newcommand{\guard}{G}
\newcommand{\strategy}{\sigma}
\newcommand{\pre}{\mathit{pre}}
\newcommand{\valspace}{X}
\renewcommand{\paragraph}[1]{{\smallskip\noindent\textit{\textbf{#1}}}}
\newtheorem*{theorem*}{Theorem}

\newtheorem{definition}[theorem]{Definition}

\begin{document}

\maketitle

\begin{abstract}
	Reachability games are two-player games played on a graph, where the objective of $\REACH$ player is to reach the target set whereas the objective of $\SAFE$ player is to stay away from the target set. Reachability games have important applications in artificial intelligence and reactive synthesis, and many of these applications give rise to infinite-state reachability games. 
	%
	In this paper, we study turn-based reachability games on infinite-state graphs defined over valuations of a finite set of real variables. We consider the problem of determining the existence of and computing a winning strategy for $\REACH$ player. Our contributions are twofold. First, we propose ranking certificates for reachability games, a sound and complete proof rule for proving that $\REACH$ player has a winning strategy from the specified initial state. Second, we consider polynomial reachability games, where transitions and objectives are described by polynomial constraints over real variables, and propose a fully automated algorithm for computing a winning strategy for $\REACH$ player together with a formal correctness witness in the form of a ranking certificate.
	The algorithm is sound, semi-complete, and runs in sub-exponential time. Our experiments demonstrate the ability of our method to solve challenging examples from the literature that were out of the reach of existing methods. Specifically, for the classical Cinderella-Stepmother game, we are able to compute an optimal winning strategy for an arbitrary precision parameter for the first time.

\end{abstract}

%
\section{Introduction} \label{sec:intro}

A standard way of modeling adversarial semantics is considering games played on graphs defined as a turn-based game on a graph where the nodes are partitioned between the two players. Such graph games have been widely studied in the literature and have many applications in formal verification, control synthesis, and artificial intelligence. For instance, alternating Turing machine \cite{DBLP:journals/jacm/ChandraKS81}, decision making in multi-agent systems \cite{DBLP:conf/nips/00050ZZBF22,DBLP:conf/ijcai/0001WSZ21}, planning \cite{ahuja2022using,skrynnik2024learn}, opponent modeling \cite{yu2022model,DBLP:conf/ijcai/0001WSZ21,li2024opponent} and controller synthesis \cite{AsarinMP94,DBLP:conf/focs/AlurHK97} are all modeled via games on graphs. Given a winning condition and a game graph, solving the game corresponds to deciding which player (if any) has a winning strategy, together with finding such a winning strategy.

One of the most fundamental and well-studied problems in graph games considers \emph{reachability} and \emph{safety} objectives, where the goal of the first ($\REACH$) player is to force a specific target set of states to be reached, while the second ($\SAFE$) player's goal is to prevent this from happening. In safety-critical systems, a winning strategy for $\SAFE$ induces a safe controller, and a winning strategy for $\REACH$ corresponds to a buggy behavior in the system. Similarly, in planning, a winning strategy for $\REACH$ induces a controller that reaches the desired target despite the adversarial behavior of the environment.
There are other objectives that have been considered in graph games. For instance, LTL and $\omega$-regular logical specifications \cite{DBLP:conf/focs/AlfaroHK98,DBLP:journals/jacm/ChatterjeeH14} are qualitative objectives, and mean payoff, discounted sum, and total payoff are quantitative objectives~\cite{DBLP:conf/atal/Steeples0W21,DBLP:conf/aaai/VorobeychikS12,DBLP:conf/aaai/SokotaLTDDBSBL21} which are not the focus of this work.

Graph games can be defined on a finite or infinite set of states. It is well known that reachability games, in both cases, have memoryless determinacy, i.e., from each state either $\REACH$ or $\SAFE$ wins with a strategy that depends only on the current state of the game, not the history of the play~\cite{martin1975borel,Mazala01}.
Infinite state games can be represented via many tools, e.g., pushdown automata~\cite{DBLP:conf/concur/BouajjaniEM97}, timed automata~\cite{DBLP:journals/tcs/AlurD94}, or transition systems~\cite{DBLP:journals/pacmpl/HeimD24}.
While determining the winner from each state in finite state games can be done in linear time~\cite{DBLP:journals/corr/abs-0805-1391,DBLP:conf/focs/AlfaroHK98}, Rice's theorem~\cite{rice1953classes} shows that the same task in general infinite state games is undecidable even on games where one of the players has only one possible action in each state. The undecidability result implies that one cannot hope for a sound and complete algorithm for solving reachability games. However, many works have examined restricted classes of games and proposed decision procedures on them, e.g. \cite{FarzanK18} focuses on games definable in linear arithmetic while \cite{DBLP:conf/aaai/FaellaP23} studies games where $\SAFE$ has a bounded number of choices in each turn. In our case, we consider games defined via polynomial expressions over game variables and propose a sound and semi-complete algorithm for solving them.

\paragraph{Contributions.}
In this work, we consider turn-based infinite-state reachability games defined over the set of all valuations of finitely many real-valued variables, also sometimes called arithmetic reachability games~\cite{FarzanK18}. We consider the problem of determining the existence of and computing a winning strategy for $\REACH$ player. Our contributions are as follows:
\begin{compactitem}
    \item {\em Theory.} We introduce {\em ranking certificates for reachability games}, a sound and complete proof rule for proving the existence of a winning strategy of $\REACH$ player. Intuitively, our ranking certificate is a function which assigns a real value to each state of the game that is required to be non-negative and to strictly decrease along a game play until the target set is reached, thus acting as a witness that the game is always making progress towards reaching the target set. Our ranking certificates draw insight from the classical notion of ranking functions for proving termination of programs~\cite{Floyd1967} and generalize it to the setting of reachability games (Section~\ref{sec:proofrule}).

    \item {\em Automation.} We present a fully automated algorithm for computing a winning strategy for $\REACH$ player together with a formal correctness witness in the form of a ranking certificate in {\em polynomial} reachability games, where transitions and the target set are described by polynomial constraints over real variables. We show that our algorithm is sound, semi-complete, and runs in sub-exponential time (Section~\ref{sec:algo}).

    \item {\em Experiments.} We implement our algorithm and perform case studies to show its practical applicability. We consider the classical Cinderella-Stepmother game~\cite{DBLP:conf/ifipTCS/BodlaenderHKSWZ12}, for which our algorithm is able to compute a winning strategy for $\REACH$ player for an arbitrary precision parameter for the first time (Section~\ref{sec:experiments}).
\end{compactitem}

\paragraph{Novelty and limitations.} Our contributions significantly advance the study of infinite-state reachability games.
\begin{compactitem}
    \item \emph{Theoretical improvements.}  The previous sound and semi-complete approach with complexity guarantees is limited to linear arithmetic and has double-exponential complexity \cite{FarzanK18}. In contrast, our approach handles polynomial arithmetic and has sub-exponential complexity, i.e., we solve a wider range of games with lower complexity.
    \item \emph{Solving classical games.} For the Cinderella-Stepmother game (Example~\ref{example:cinderella}), while it is known that for bucket capacity less than 2 $\REACH$ has a winning strategy, the method of \cite{DBLP:conf/popl/BeyeneCPR14} could solve the game only up to capacity 1.4. This was later improved by the semi-complete method of \cite{FarzanK18} for capacity up to $1.7$. More recently, \cite{DBLP:conf/sigsoft/SamuelDK21} proposed a method that handles capacity up to $2-10^{-20}$ but provides no completeness guarantees. Our approach is the first to solve this game for bucket capacity $2-\epsilon$, where $\epsilon>0$ is a variable, i.e., we can solve the game for arbitrary precision parameter.
\end{compactitem}

Our approach has two main limitations: (i) our algorithm applies only to polynomial games, and extension to more general classes of games is an interesting direction, and (ii) it focuses primarily on reachability games, and extension to other objectives is another interesting direction of research.

\paragraph{Related work.}
Infinite-state reachability games have received increased interest in recent years. One line of work reduces this problem to logical satisfiability problems and then uses off-the-shelf solvers to check satisfiability. For instance, ~\cite{DBLP:conf/popl/BeyeneCPR14} reduce the problem to constrained Horn clause (CHC) solving and~\cite{DBLP:conf/aaai/FaellaP23} use a similar translation for a class of games in which the number of choices of $\SAFE$ player is bounded. \cite{FarzanK18} consider linear reachability games and propose a method for solving them by gradually increasing the finite-time horizon of the game, until a winning strategy for one of the players is computed. Another line of work is based on using fixed-point computation-based methods to compute winning strategies~\cite{DBLP:journals/pacmpl/HeimD24,DBLP:conf/cav/SchmuckHDN24,DBLP:journals/pacmpl/UnnoTGK23}. These methods do not provide completeness guarantees and the fixed-point computation may not terminate. \cite{DBLP:conf/sigsoft/SamuelDK21} develop a fixed-point computation-based method and show its scalability by outperforming other existing methods on challenging benchmarks. \cite{BaierCFFJS21} use Craig interpolation to achieve sufficient sub-goals and necessary sub-goals for $\REACH$ player to divide the game into simpler sub-games, which are easier to solve. In comparison to all these works, our automated method is the first to provide the following desirable features: (i) it is applicable to polynomial reachability games (unlike~\cite{FarzanK18}), (ii) it provides semi-completeness guarantees (unlike~\cite{DBLP:journals/pacmpl/HeimD24}), and (iii) it runs in sub-exponential time.


\section{Preliminaries}\label{sec:prelims}

In this section, we define the model of infinite-state reachability games that we consider in this work. We use $\R$ and $\N$ for the sets of real numbers and of non-negative integers. 

\begin{definition}[Reachability game]
    A two-player (infinite-state) reachability game between players $\REACH$ and $\SAFE$ is a tuple $\game = (\vars,\valspace, \valinit, \rlabels, \slabels, \labelinit, \trans)$ where:
    \begin{compactitem}
        \item $\vars$ is a finite set of real-valued {\em variables};
        \item $\valspace \subseteq \R^{|\vars|}$ is the set of allowed valuations of variables; 
        \item $\valinit \in \valspace$ is the {\em initial variable valuation};
        \item $\rlabels$ and  $\slabels$ are two disjoint sets of labels. We use $\labels = \rlabels \cup \slabels$ for the set of all {\em labels} in the game; 
        \item $\labelinit \in \labels$ is the {\em initial label} of the game;
        \item $\trans$ is a finite set of {\em transitions}. Each $\transition \in \, \trans$ is a tuple $\transition = (l, l', \guard_\transition, \update_\transition)$ where $l$ and $l'$ are the source and the target labels, $\guard_\transition \subseteq \valspace$ is the guard of the transition that specifies for which variable valuations the transition can be taken, and $\update_\transition$ is a relation in $\guard_\transition \times \valspace$ that specifies the set of all possible (not necessarily unique) variable valuation updates upon taking the transition.
    \end{compactitem}
\end{definition}

A {\em state} in the game is a tuple $(l,\val)$ consisting of a label $l \in \labels$ and a variable valuation $\val \in \valspace$. The state $(\labelinit,\valinit)$ is said to be the {\em initial state} of the game. For two states $s_1=(l_1,\val_1)$ and $s_2=(l_2,\val_2)$, we say that $s_2$ is a {\em successor} of $s_1$ and write $s_1 \trans s_2$ if there exists a transition $\tau$ with source label $l_1$ and target label $l_2$ such that $\val_1 \in \guard_\tau$ and $(\val_1,\val_2) \in \update_\tau$. For each state $s \in \states$, we use $\succ(s) := \{s'| s \trans s'\}$ to denote the set of successors of $s$. We write $\rstates = \rlabels \times \valspace$ and $\sstates = \slabels \times \valspace$ and say that these states {\em belong to} players $\REACH$ and $\SAFE$, and write $\states = \rstates \cup \sstates$ for the set of all states.

We assume that for each label $l \in \labels$ we have $\bigcup_{(l, l', \guard_\transition, \update_\transition) \in \trans} \guard_\transition = \valspace$, meaning that it is always possible to take at least one transition and that there are no deadlock states in the game. This assumption is imposed without loss of generality, as we may add two auxiliary labels belonging to the opposite player with self-loop transitions, to which the game transitions whenever a deadlock state is reached. Moreover, we assume that $\guard_\tau \cap \guard_{\tau'} = \emptyset$ for any two transitions $\tau,\tau'$ outgoing from the same label $l$, which is also imposed without loss of generality as one can merge the update relations of two transitions at states in which the guards of both transitions are satisfied. These two assumptions together ensure that, for every state $s = (l,\val)$, there is a unique transition $(l,l',\guard_\tau,\update_\tau)$ with $\val \in \guard_\tau$. 

\paragraph{Semantics.} A reachability game is played between two players $\REACH$ and $\SAFE$, which indefinitely move a token along the states of the game. The token is initially placed at the initial state $(\labelinit,\valinit)$. Then, in each turn, if the token is in a state $s = (l,\val)$, the player to whom the state $s$ belongs gets to move the token. That player takes the unique transition $\tau = (l,l',G_\tau,\update_\tau)$ with $\val \in \guard_\tau$ and chooses a variable valuation $\val'$ such that $(\val,\val') \in \update_\tau$. The token is then moved to the successor state $(l',\val')$. This is repeated indefinitely and gives rise to an infinite sequence $\rho = (s_0, s_1, \dots)$ where $s_0 = s_\init$ and $s_i \trans s_{i+1}$ for all $i \in \N$, which we call a {\em play}. A {\em finite play} is a finite prefix of a play. 

\paragraph{Strategies.} A strategy of a player $P \in \{\REACH,\SAFE\}$ is a function $\strategy_P\colon \states^* \times \states_P \to \states$ which maps every finite play ending in a state that belongs to the player $P$ to a successor state of the last state. A strategy $\sigma_P$ is said to be {\em memoryless}, if for every two finite plays $h$ and $h'$ that end in the same state belonging to player $P$, we have $\sigma_P(h) = \sigma_P(h')$. Every two strategies $\sigma_\REACH$ and $\sigma_\SAFE$ of the two players induce a unique play which we denote by $\rho(\sigma_\REACH,\sigma_\SAFE)$.

\paragraph{Winning condition: Reachability.} In this work, we study infinite-state reachability games. A reachability game is equipped with a set of {\em target (or objective) states} $\target \subseteq \states$. A play $\rho$ is said to be {\em winning for $\REACH$} if it contains a state in the target set, otherwise, it is {\em winning for $\SAFE$}.

A strategy $\sigma_P$ of a player $P \in \{\REACH,\SAFE\}$ is said to be {\em winning} if, for every strategy $\sigma$ of the other player, the induced play is winning for the player $P$. It is a classical result in turn-based games that, in reachability games, there exists a winning strategy for $P \in \{\REACH,\SAFE\}$ if and only if there exists a memoryless winning strategy for $P$~\cite{Mazala01}. Hence, we, without loss of generality, restrict our attention to memoryless strategies.

\paragraph{Problem statement.} Given a reachability game $\game$ with target set $\target$, the goal is to decide whether $\REACH$ has a winning strategy and, if yes, compute such winning strategy.

\begin{example} \label{example:cinderella}
    Fig. \ref{fig:cinderella} shows the Cinderella-Stepmother game, which is a classical example of an infinite-state reachability game~\cite{DBLP:conf/ifipTCS/BodlaenderHKSWZ12}. The game is defined over five real-valued variables ($b_0,\dots, b_4$) which denote capacities of five buckets that are placed along a circle. The buckets are initially empty, hence the initial variable valuation is $(0,\dots,0)$. Each bucket has maximum capacity $U$, hence the value of the variables can range over $[0,U+1]^5$. There are two labels $\texttt{C}$ and $\texttt{SM}$ that belong to the Cinderella and the Stepmother players, respectively, with $\texttt{SM}$ being the initial label. As can be seen from Fig.~\ref{fig:cinderella}, the two players play alternatively, starting with the Stepmother. In each turn, Stepmother distributes 1 liter of water among the five buckets (the transition from \texttt{SM} to \texttt{C}). Then, Cinderella chooses two neighbouring buckets along the circle and empties them (the transition from \texttt{C} to \texttt{SM}). Stepmother's objective is to make one of the buckets overflow; hence, she is the $\REACH$ player with the target set of states $\target = \{(\texttt{C},\mathbf{b})| \bigvee_{0 \leq i \leq 4} \mathbf{b}_i > U\}$. On the other hand, Cinderella's objective is to prevent any bucket from overflowing, hence she is the $\SAFE$ player.
    \begin{figure}[t]
    \centering 
    \begin{tikzpicture}[
    scale=0.7,
    transform shape,
    node distance=6cm, 
    reach node/.style={circle, draw, thick, align=center, minimum width=1cm, minimum height=1cm},
    safe node/.style={rectangle, draw, thick, align=center, minimum width=1cm, minimum height=1cm},
    double node/.style={circle, draw, double, thick, align=center, minimum width=1cm, minimum height=0.6cm},
    every edge/.append style={->, thick, >=Stealth}, 
    math label/.style={font=\large, sloped}
]

    \node[reach node] (stepmother) {\texttt{SM}};
    \node[safe node] (cinderella) [right=of stepmother] {\texttt{C}};

    \draw[<-, thick, >=Stealth] (stepmother.west) -- ++(-1.5cm, 0) node[midway, above] {$\mathbf{b}=0$};
    \path (stepmother) edge[bend left=30,
        "$\bigwedge_{i=1}^5 b_i\leq b_i' \land \sum_{i=1}^5 (b_i'-b_i) = 1$",
        font=\large,sloped, align=center, text width=6cm,
        pos=0.5
    ] (cinderella);

    \path (cinderella) edge[bend left=30] 
  node[font=\large,midway, below, math label] 
  {
    $b'_0 = b_1' = 0 
      \vee  b_1' = b_2' = 0 
      \dots 
      \vee  b_4' = b_0' = 0$
  } 
  (stepmother);

    \end{tikzpicture}
    \caption{Cinderella-Stepmother game defined over $\valspace = [0,U+1]^5$. $\REACH$ owns the label $\texttt{SM}$ with the objective of reaching a state where $b_i > U$ for some $i$.  
    }
    \label{fig:cinderella}
\end{figure}
\end{example}

\section{Ranking Certificate for Reachability Games} \label{sec:proofrule}

We now introduce our ranking certificates for reachability games. The results presented in this section are applicable to the {\em general class of reachability games}, and not just polynomial reachability games. In what follows, suppose that $\game = (\vars,\valspace, \valinit, \rlabels, \slabels, \labelinit, \trans)$ is a reachability game with a set of target states $\target \subseteq \states$.

\paragraph{Motivation: Ranking functions for programs.} Our certificate is motivated by the notion of ranking functions in the program verification literature~\cite{Floyd1967,ColonS01,PodelskiR04}, presenting a classical approach to proving termination in programs. A ranking function maps program states to real values and is required to (1) be non-negative at all reachable states and (2) make progress towards termination by decreasing by at least $1$ at every step until a terminal state is reached. If such a function exists, then every run of that program terminates as a ranking function cannot remain non-negative while indefinitely decreasing by at least $1$ at every step. Hence, ranking functions provide a sound (and complete) proof rule for proving termination of programs~\cite{Floyd1967}. 

\paragraph{Our certificate for reachability games.} We now introduce our ranking certificates for reachability games, which provide a formal proof of the existence of a winning strategy of the $\REACH$ player. Similarly to ranking functions in programs, our ranking certificate for reachability games is a function $f: \states \rightarrow \R$ which maps each game state to a real value. However, unlike ranking functions that need to be non-negative and to make progress towards termination at all reachable program states, we encounter two challenges in generalizing this idea to reachability games:
\begin{compactenum}
    \item {\em Not all states are winning for $\REACH$.} In reachability games, there are states from which $\REACH$ does not have a winning strategy. Hence, imposing a progress condition on such states may make our ranking certificates overly conservative. To overcome this challenge, we also use our ranking certificates as indicators of whether a state is winning for $\REACH$ by using $f(s) \geq 0$ as an indicator that $\REACH$ has a winning strategy from a state $s$. In particular, in (C1) in Definition~\ref{def:reach-rank}, we only require $f$ to be non-negative at the initial game state. Then, in (C2) and (C3) in Definition~\ref{def:reach-rank}, which are progress conditions, we only impose the progress condition when $f(s) \geq 0$.
    \item {\em States may belong to different players.} In reachability games, states may belong to either $\REACH$ or $\SAFE$ player, who gets to choose the successor state to which the game token should be moved. This leads to different requirements on the progress condition. In particular, in (C2) in Definition~\ref{def:reach-rank}, we impose the progress condition of $f$ for {\em all} successor states that can be chosen by $\SAFE$. In contrast, in (C3) in Definition~\ref{def:reach-rank}, we only impose the progress condition of $f$ for {\em some} successor state that can be chosen by $\REACH$. That way $\REACH$ can always ensure to make progress towards the target set, while $\SAFE$ cannot prevent progress towards the target set.
\end{compactenum} 
The following definition formalizes the intuition discussed above and formalizes our novel notion of reachability certificates for reachability games.




\begin{definition}[Ranking certificates for reachability games]\label{def:reach-rank}
Given a reachability game $\game = (\vars,\valspace, \valinit, \rlabels, \slabels, \labelinit, \trans)$ with a set of target states $\target \subseteq \states$, a {\em ranking certificate} is a function $f\colon \states \to \R$ that satisfies the following conditions:
\begin{compactenum}
    \item[(C1)] $f(s_\init) \geq 0$, i.e. $f$ is non-negative at the initial state.
    \item[(C2)] If $s \in \sstates \setminus \target$ and $f(s) \geq 0$, then 
    \[ \forall s' \in \succ(s).\, f(s)-1 \geq f(s') \geq 0. \]
    \item[(C3)] If $s \in \rstates \setminus \target$ and $f(s) \geq 0$, then 
    \[ \exists s' \in \succ(s).\, f(s)-1 \geq f(s') \geq 0. \]
\end{compactenum}
    In condition (C3) above, we refer to every $s' \in \succ(s)$ for which the condition is satisfied as a {\em ranking successor} of $s$ with respect to $f$.
\end{definition}


\paragraph{Soundness.} The following theorem, proved in the Appendix, establishes soundness of our ranking certificates. In particular, it shows that if there exists a ranking certificate in a reachability game, then there exists a winning strategy for $\REACH$ player. The theorem also shows how a winning strategy can be constructed from the ranking certificate.

\begin{theorem}[Soundness]\label{thm:soundness}
    Suppose that there exists a ranking certificate $f$ in a reachability game $\game$ with a set of target states $\target \subseteq \states$. Then, there exists a memoryless winning strategy for $\REACH$ player. 
    
    An example of such a strategy is a strategy $\rstrat^f$ that maps each state $s \in \rstates$ to a ranking successor of $s$ if $f(s) \geq 0$, and to an arbitrary successor of $s$ if $f(s) < 0$.
\end{theorem}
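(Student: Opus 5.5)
We show that the strategy $\rstrat^f$ from the statement is well defined, memoryless and winning. It is well defined because (C3) guarantees that every $s \in \rstates \setminus \target$ with $f(s) \geq 0$ has at least one ranking successor. We fix one such successor for each state; the axiom of choice suffices, since the state space may be uncountable. On all other states of $\REACH$ we choose an arbitrary successor, which exists by the no-deadlock assumption. For states in $\target$ the choice is irrelevant. The strategy depends only on the current state, so it is memoryless by construction.

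Next, fix an arbitrary strategy $\sigma_\SAFE$ and let $\rho = (s_0, s_1, \dots) = \rho(\rstrat^f, \sigma_\SAFE)$ be the induced play. We argue by contradiction and assume that no $s_i$ lies in $\target$. We then prove by induction on $i$ the invariant
\[ f(s_i) \geq 0 \quad\text{and}\quad f(s_i) \leq f(s_0) - i. \]
The base case is (C1), since $s_0 = s_\init$. For the inductive step, suppose $s_i \notin \target$ and $f(s_i) \geq 0$, and consider who owns $s_i$.
\begin{compactitem}
\item If $s_i \in \sstates$, then (C2) applies to every successor. In particular it applies to the successor $s_{i+1}$ chosen by $\SAFE$, so $f(s_i) - 1 \geq f(s_{i+1}) \geq 0$.
\item If $s_i \in \rstates$, then $s_{i+1} = \rstrat^f(s_i)$ is a ranking successor by construction. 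By definition this gives the same inequality $f(s_i) - 1 \geq f(s_{i+1}) \geq 0$.
\end{compactitem}
In both cases the invariant extends from $i$ to $i+1$.

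To conclude, take any $i > f(s_0)$. The invariant gives $0 \leq f(s_i) \leq f(s_0) - i < 0$, which is a contradiction. Hence some $s_i$ lies in $\target$, so $\rho$ is winning for $\REACH$. Since $\sigma_\SAFE$ was arbitrary, $\rstrat^f$ is a winning strategy.

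There is no real obstacle here. The only subtle point is to carry both halves of the invariant together. Non-negativity at each step is exactly what makes the progress conditions (C2) and (C3) applicable at the next step, because those conditions are only imposed when $f(s) \geq 0$. For this reason the induction must propagate $f(s_i) \geq 0$ alongside the decrease, rather than derive it afterwards.
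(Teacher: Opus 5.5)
Your proof is correct and takes the same route as the paper's. In both, (C1) gives non-negativity at $s_\init$, and (C2)/(C3) propagate non-negativity together with a decrease of at least $1$ along the play induced by $\rstrat^f$, so $\target$ must be hit within finitely many steps. Your explicit two-part invariant makes the paper's informal argument precise. You also check that $\rstrat^f$ is well defined, in particular on target states, where a ranking successor need not exist; the paper leaves this implicit.
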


\paragraph{Completeness.} Although theorem~\ref{thm:soundness} shows the soundness of our ranking certificates, it turns out that the proof rule is not complete. The following is an example of a reachability game in which the $\REACH$ player has a memoryless winning strategy, but there does not exist a ranking certificate to witness this. 

\begin{example} \label{example:incomplete}
    
Consider the reachability game in Figure \ref{fig:incomplete}. In this game, starting from the initial label $\texttt{S}$ that belongs to $\SAFE$ and the variable valuation $x=0$, $\SAFE$ adversarially assigns an arbitrary real value to $x$. The game then moves to the label $\texttt{R}$ that belongs to $\REACH$, and in every step $\REACH$ decrements the value of $x$ by $1$. The set of target states in this reachability game is $\target = \{(\texttt{R},x)|x<0\}$.  

It is clear that $\REACH$ has a memoryless winning strategy by always decrementing $x$ by $1$. That is because, regardless of the value of $x$ that $\SAFE$ chooses, the value of $x$ will eventually become negative. However, this reachability game turns out not to admit a ranking certificate. To see this, suppose, for the sake of contradiction, that there exists a ranking certificate $f$. Then $f$ must satisfy the following, by conditions (C1)-(C3) in Definition~\ref{def:reach-rank}:
\[
\begin{split}
    (C1): \,\,\,&f(\texttt{S},0)\geq 0 \\
    (C2): \,\,\,&\forall x \in \R.\, f(\texttt{S},0) -1 \geq f(\texttt{R},x) \geq 0 \\
    (C3): \,\,\,&\forall x \in \R.\, x \geq 0 \land f(\texttt{R},x) \geq 0 \\
    &\Longrightarrow f(\texttt{R},x) -1 \geq f(\texttt{R},x-1) \geq 0 
\end{split}
\]
The second condition above implies that $f(\texttt{R},x) \geq 0$ for all $x \in \R$. Thus, it follows from the third condition above and by a simple induction that $f(\texttt{R},n) \geq n$ for $n \in \N$. Then by (C2), it is directly implied that $f(\texttt{S},0) \geq n$ for all $n \in \N$ which means $f(\texttt{S},0)$ cannot have any finite value and the reachability game does not admit a ranking certificate.

\begin{figure}[t] 
    \centering 
    \begin{tikzpicture}[
    scale=0.7,
    transform shape,
    node distance=5.5cm, 
    reach node/.style={circle, draw, thick, align=center, minimum width=1cm, minimum height=1cm},
    safe node/.style={rectangle, draw, thick, align=center, minimum width=1cm, minimum height=1cm},
    every edge/.append style={->, thick, >=Stealth}, 
    math label/.style={font=\small, sloped} 
]

    \node[safe node] (left) {$\texttt{S}$};
    \node[reach node] (middle) [right of=left] {$\texttt{R}$};

    \path (left) edge["$x' = ?$", math label] (middle);


    \path (middle) edge[loop below, in=250, out=290, looseness=5, "$x' = x - 1$" below, math label] (middle);

    \draw[<-, thick, >=Stealth] (left.west) -- ++(-1.5cm, 0) node[midway, above] {$x=0$};
    
    \end{tikzpicture}
    \caption{A reachability game demonstrating incompleteness of our ranking certificates.} 
    \label{fig:incomplete} 
\end{figure}

\end{example}


Intuitively, the incompleteness issue in Example~\ref{example:incomplete} arises since $\SAFE$ can choose between infinitely many choices for the value of $x$. This allows $\SAFE$ to enforce an arbitrarily large lower bound on the value of $f$ at the initial state of the game, and no finite value would be sufficient. It also hints that, if we were to restrict $\SAFE$ by only providing it with finitely many choices in each step of the game, it may be possible to overcome the incompleteness issue. The following theorem shows that this is indeed the case, and that ranking certificates for reachability games are both sound and complete if $\SAFE$ is restricted to having only finitely many choices at each step. The proof is provided in the Appendix.

\begin{theorem}[Completeness for finite $\SAFE$ choices]\label{thm:completeness}
    Let $\game$ be a reachability game with a set of target states $\target \subseteq \states$. Suppose that $\SAFE$ has only finitely many choices at each step, i.e. for every state $s \in \sstates$ the set of successor states $\succ(s)$ is finite. If $\REACH$ has a winning strategy, then there exists a ranking certificate for this reachability game.
\end{theorem}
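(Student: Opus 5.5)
Fix a memoryless winning strategy $\sigma$ for $\REACH$ from $s_\init$; this is possible without loss of generality, as discussed in Section~\ref{sec:prelims}. The plan is to define $f$ as a ``remaining distance to the target under $\sigma$'' and verify (C1)--(C3). First, let $W \subseteq \states$ be the set of states $s$ such that every play starting at $s$ that follows $\sigma$ at $\REACH$ states visits $\target$. By hypothesis $s_\init \in W$. For $s \in W$, consider the tree $T_s$ of all finite plays starting from $s$ that are consistent with $\sigma$ and that are stopped at their first visit to $\target$. In this tree a $\REACH$ node has exactly one child (its $\sigma$-successor), and by assumption a $\SAFE$ node has finitely many children. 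So $T_s$ is finitely branching. Because $s \in W$, it has no infinite branch.

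By K\H{o}nig's lemma, $T_s$ is therefore finite, and its height $h(s) \in \N$ is well defined. This is the step where finiteness of $\SAFE$'s choices is used, and it is the main point of the proof; Example~\ref{example:incomplete} shows that without it $h$ could be infinite. Define $f(s) = h(s)$ for $s \in W$ and $f(s) = -1$ for $s \notin W$. In particular $f(s) = 0$ for $s \in \target$.

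Now check the conditions. (C1) holds because $s_\init \in W$. For (C2), let $s \in \sstates \setminus \target$ with $f(s) \geq 0$, so $s \in W$. Every successor $s'$ of $s$ is again in $W$, since every $\sigma$-consistent play from $s'$ extends to one from $s$, and $s \notin \target$ means the target visit cannot have happened at $s$ itself. Moreover $T_{s'}$ is a subtree of $T_s$ hanging below the root, so $h(s') \leq h(s) - 1$, and $h(s') \geq 0$. For (C3), let $s \in \rstates \setminus \target$ with $s \in W$. Taking $s' = \sigma(s)$, the same argument gives $s' \in W$ and $0 \leq h(s') = h(s) - 1$, so $s'$ is a ranking successor. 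States with $f(s) < 0$ impose no constraints.

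The only delicate points are the ones above: justifying memoryless $\sigma$, which allows $f$ to depend only on the state, and the K\H{o}nig's lemma argument guaranteeing $h(s) < \infty$. The rest is direct verification.
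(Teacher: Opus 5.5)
Your proof is correct and follows essentially the same approach as the paper. In both, $f$ is the worst-case number of steps to $\target$ under a fixed memoryless winning strategy, set to $-1$ elsewhere, and finiteness of $\SAFE$'s choices is what rules out infinite values; the paper defines $f$ recursively on the $\rstrat$-accessible states and proves finiteness by directly building an infinite non-winning play (in effect a hand-rolled K\H{o}nig argument), whereas your explicit tree-height definition on $W$ states the same idea more cleanly.
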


\paragraph{Novelty of certificates.} Unlike classical attractor computation (progress measure) \cite{Mazala01} methods that compute the minimal alternating distance, our notion of ranking certificates only requires a strict decrease in its value. Relaxing this optimality constraint allows for much simpler witnesses, i.e. in the form of polynomial functions.


\section{Algorithm for Solving Reachability Games} \label{sec:algo}

We now present our automated algorithm for proving the existence of and computing a winning strategy of $\REACH$ player. Given that this problem is undecidable even in 1-player games due to Rice's theorem, we cannot hope for an algorithm that is both sound and complete. Instead, in what follows we design a {\em sound and semi-complete} algorithm. By soundness, we mean that the output of our algorithm and the produced winning strategy are guaranteed to be correct. By semi-completeness, we mean that our algorithm is guaranteed to compute a winning strategy for $\REACH$ player whenever it exists, for a subclass of reachability games that we formally characterize below.


\paragraph{Assumption: Polynomial reachability games.} To enable full automation, we restrict our algorithm to polynomial reachability games. A reachability game $\game$ with target states $\target \subseteq \states$ is said to be {\em polynomial}, if 
    (i) For every transition $\tau \in\, \mapsto$, the guard $\guard_\tau \subseteq \mathbb{R}^{\vars}$ is a set of boolean combination of polynomial inequalities over variables~$\vars$;
    (ii) For every transition $\tau \in\, \mapsto$, the update $\update_\tau \subseteq \guard_\tau \times \valspace$ is a set of boolean combination of polynomial inequalities over variables $\vars \cup \vars'$, where $\vars'$ denote the variables $\vars$ upon update;
    (iii) For each label $l \in \labels$, the set of target variable valuations $\{\val \mid (l,\val) \in \target\}$ can be represented as a set of a boolean combination of polynomial inequalities over variables $\vars$;
    (iv) The set of allowed variable valuations $\valspace \subseteq \R^{|\vars|}$ is a satisfiability set of a boolean combination of polynomial inequalities over variables $\vars$.


\paragraph{Algorithm overview.} Our algorithm proves the existence of a memoryless winning strategy for $\REACH$ by synthesizing a ranking certificate $f:\states \to \R$ and a memoryless winning strategy $\strategy:\rstates \to \states$ for $\REACH$. These two objects are searched for and computed simultaneously by following a template-based synthesis approach. The algorithm proceeds in four steps. In Step~1, the algorithm fixes symbolic polynomial template expressions that define the ranking certificate $f$ and the strategy $\sigma$, one for each label in the game. In Step~2, the algorithm collects a system of constraints over the symbolic template variables that together encode that $f$ defines a valid ranking certificate and that $\sigma$ is a winning strategy induced by $f$ as in Theorem~\ref{thm:soundness}. In Step~3, these constraints are simplified in order to reduce them to a sentence in the purely existentially quantified theory of reals. Finally, in Step~4, the constraints are solved by an off-the-shelf SMT solver, and any solution gives rise to a concrete instance of a valid ranking certificate and a winning strategy. In what follows, we present the details behind each of the four steps.

\paragraph{Step 1: Setting up polynomial templates.} The algorithm takes as input a polynomial degree parameter $D \in \N$, and it fixes symbolic polynomial templates of maximal polynomial degree $D$ for expressions that define the ranking certificate $f$ and the strategy $\sigma$. Denote by $M_\vars^D = \{m_0, \dots, m_k\}$ the set of all monomials of degree at most $D$ over the set of variables $\vars$. The template for the strategy $\sigma$ at each game label $l \in \labels$ and variable $v \in \vars$ is defined via
\begin{align}\label{formula:poly-strat}
\strategy^v_l(\val) = \sum_{i=0}^k t^{l,v}_i\cdot m_i
\end{align}
where each $t^{l,v}_i$ is a real-valued symbolic template variable, and $\strategy^v_l(\val)$ evaluates to the value of variable $v$ upon a one-step execution of the strategy $\sigma$ at state $(l,\val)$. The template for the ranking certificate $f$ at each game label $l \in \labels$ is defined via
\begin{align} \label{formula:poly-rank}
f_l(\val) = \sum_{i=0}^k s^{l}_i\cdot m_i
\end{align}
where each $s^l_i$ is a real-valued symbolic template variable, and $f_l(\val)$ evaluates to the value of the ranking certificate $f$ at state $(l,\val)$. The values of symbolic template variables $t^{l,v}_i$ and $s^l_i$ are at this point unspecified, and the goal of template-based synthesis is to compute concrete values of template variables that together give rise to a valid ranking certificate $f$ and a winning strategy $\sigma$.


\begin{example}
    Consider the Cinderella-Stepmother game presented in Example~\ref{example:cinderella}, and let $D = 1$ be the polynomial degree parameter. The variable set in this example is $\vars = \{b_0,b_1,b_2,b_3,b_4\}$ and set of all monomials of degree at most $1$ over $\vars$ is $M_\vars^1 = \{1, b_0, b_1, b_2, b_3, b_4\}$. So, in Step~1, the following template is set for $\strategy^{b_0}_{\texttt{R}}$ which evaluates to the value of variable $b_0$ upon executing the strategy at state $(\texttt{R},\mathbf{b})$:
    \[
    \strategy^{b_0}_{\texttt{R}}(\mathbf{b}) = t_0^{\texttt{R},b_0} + t_1^{\texttt{R},b_0} \cdot b_0 + t_2^{\texttt{R},b_0}\cdot b_1 + t_3^{\texttt{R},b_0}\cdot b_2 + t_4^{\texttt{R},b_0}\cdot b_3 + t_5^{\texttt{R},b_0}\cdot b_4
    \]
    Similarly, the following template is fixed for $f_{\texttt{R}}$ which evaluates to the value of $f$ at state $(\texttt{R},\mathbf{b})$:
     \[f_\texttt{R}(\mathbf{b}) =  s_0^{\texttt{R}} + s_1^{\texttt{R}} \cdot b_0 + s_2^{\texttt{R}}\cdot b_1 + s_3^{\texttt{R}}\cdot b_2 + s_4^{\texttt{R}}\cdot b_3 + s_5^{\texttt{R}}\cdot b_4 \]
\end{example}

\paragraph{Step 2: Collecting defining constraints.} The algorithm now collects a system of constraints over the symbolic template variables introduced in the previous step, which together encode that $f$ is a valid ranking certificate and that $\sigma$ is a strategy induced by $f$ as in Theorem~\ref{thm:soundness}. In each constraint, every appearance of $f_l$ and $\sigma^v_l$ is substituted by the symbolic polynomial template expression introduced in Step~1:
\begin{itemize}
    \item[(C1)] For condition (C1) in Definition~\ref{def:reach-rank}, the constraint $f_{l_\init}(\boldsymbol{x}_\init) \geq 0$ is collected.
    \item[(C2)] For condition (C2) in Definition~\ref{def:reach-rank}, for each label $l \in \slabels$ and each transition $\tau = (l, l', \guard_\transition, \update_\transition) \in \,\trans$, the algorithm collects the following constraint
    \[
    \begin{split}
    &\forall \val,\val' \in \valspace.\, \\
    &\bigg((\val \vDash \neg \target_l \wedge \guard_\transition) \wedge (f_l(\val) \geq 0) \wedge (\val,\val') \vDash \update_\transition\bigg) \\
    &\Longrightarrow f_l(\val) -1 \geq f_{l'}(\val') \geq 0,
    \end{split}
    \]
    where $\target_l = \{\val \in \R^{\vars} \mid (l,\val) \in \target\}$, $\guard_\tau$ and $\update_\tau$ are all represented in terms of a boolean combination of polynomial constraints over the variables in $\vars \cup \vars'$, as specified in the algorithm assumption stated above. 
    %
    \item[(C3)] For condition (C3) in Definition~\ref{def:reach-rank} and to enforce that for each state $s$ belonging to $\REACH$ the state $\strategy(s)$ is a ranking successor of $s$ as defined in Theorem~\ref{thm:soundness}, the algorithm collects the following constraint for each label $l \in \rlabels$ and each transition $(l, l', \guard_\transition, \update_\transition) \in \,\trans$
    \[
    \begin{split}
        \forall \val\in \valspace.\, &\bigg((\val \vDash \neg \target_l \wedge \guard_\transition) \wedge (f_l(\val) \geq 0)] \bigg) \\
    \Longrightarrow & (\val,\strategy_l(\val)) \vDash \update_\transition \wedge f_l(\val) -1 \geq f_{l'}(\strategy_l(\val)) \geq 0, 
    \end{split}
    \]
    where $\target_l = \{\val \in \R^{\vars} \mid (l,\val) \in \target\}$ and $\guard_\tau$ are represented in terms of a boolean combination of polynomial constraints over the variables in $\vars$. Intuitively, the right-hand-side of the implication encodes that $\sigma_l(\val)$ is a ranking successor of $(l,\val)$ as defined in Theorem~\ref{thm:soundness}.
\end{itemize}
    \begin{example}
        Consider again the Cinderella-Stepmother game, where the maximum capacity of each bucket is set to $U = 1.8$. For (C1), the algorithm collects $f_{\texttt{SM}}(0) \geq 0$. For (C2), the following constraint is collected for the transition from label $\texttt{C}$ to label \texttt{SM}, as Cinderella is the $\SAFE$ player: 
        \[
        \begin{split}
        \forall \mathbf{b},\mathbf{b'} &\in [0,2.8]^5.\, \bigg(\bigwedge_{0 \leq i \leq 4} b_i \leq 1.8   \wedge f_{\texttt{C}}(\mathbf{b}) \geq 0 \\
        &\wedge \bigvee_{0 \leq i \leq 4} [ \bigwedge_{\substack{0 \leq j \leq 4 \\ j\neq i \\ j \neq (i+1)\%5}} b_j' = b_j \wedge b_i' = b_{(i+1)\%5}' = 0 ] \bigg) \\
        &\Longrightarrow f_{\texttt{C}}(\mathbf{b}) - 1 \geq f_{\texttt{SM}}(\mathbf{b}') \geq 0.
        \end{split}
        \]
        Finally, for (C3), a similar constraint is collected for the transition from $\texttt{SM}$ to \texttt{C}, as Stepmother is the $\REACH$ player. 
    \end{example}
    
    \paragraph{Step 3: Removing $\forall$ quantifiers.} The collected constraints for conditions (C2) and (C3) are of the form 
    \begin{equation}\label{eq:implication}
        \forall \val \in \R^\vars.\, \big(\valspace(\val) \wedge P(\val)\big) \Longrightarrow Q(\val),
    \end{equation}
    with $P(\val)$, $Q(\val)$ and $\valspace$ being boolean combinations of polynomial inequalities over variables $\val$. The presence of universal quantification makes solving these constraints computationally expensive. To make our algorithm more efficient, we apply the translation of~\cite{asadi}, which uses Putinar's theorem (for general polynomials) or Farkas' lemma (for degree $1$ polynomials) to translate constraints of the form as in eq.~\eqref{eq:implication} into a system of purely existentially quantified polynomial constraints. As shown in~\cite{asadi}, the translation is sound in the sense that any satisfying assignment of the resulting system of constraints yields a satisfying assignment of the original system of constraints. Moreover, the translation is complete and guarantees equisatisfiability whenever the satisfiability set of the left-hand-side is compact. For the interest of space, we omit the details of this translation and refer the reader to~\cite{asadi}.

    
    
    \paragraph{Step 4: Solving the constraints.} Step~3 results in a system of purely existentially quantified polynomial constraints over real-valued variables. In the last step, our algorithm calls an off-the-shelf SMT solver to solve this system of constraints. If a solution is found, the computed valuation of symbolic template variables gives rise to a concrete instance of the ranking certificate $f$ and the memoryless winning strategy~$\sigma$. Otherwise, the algorithm returns \texttt{UNKNOWN}.




\begin{theorem}\label{thm:algo}
    Consider a polynomial reachability game $\game = (\vars, \valspace, \valinit, \rlabels, \slabels, \labelinit, \trans)$ with target states $\target$: 
    \begin{compactenum}
        \item {\em (Soundness)} If the algorithm returns a solution $(f,\strategy)$, then $f$ is a valid ranking certificate for this reachability game and $\strategy$ is a memoryless winning strategy of $\REACH$.
        \item {\em (Semi-completeness)} Whenever the set of possible valuations $\valspace$ is compact, if there exist a polynomial ranking certificate $f$ and a polynomial memoryless winning strategy $\strategy$ for $\game$ then, for a sufficiently high value of the maximal polynomial degree parameter $D$, the algorithm is guaranteed to compute them. 
        \item {\em (Complexity)} The runtime of the algorithm is sub-exponential in the size of the input game $\game$ and $\target$, for each fixed value of the maximal polynomial degree $D$. 
    \end{compactenum}
\end{theorem}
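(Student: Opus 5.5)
We prove the three parts separately, relying on Theorem~\ref{thm:soundness} and on the soundness, completeness and complexity guarantees of the translation of~\cite{asadi} used in Step~3.

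\emph{Soundness.} Suppose the algorithm returns concrete template values, i.e.\ concrete polynomials $f_l$ and $\strategy_l$. By soundness of the Step~3 translation, every satisfying assignment of the existential system also satisfies the original universally quantified constraints (C1)--(C3) collected in Step~2. We then check Definition~\ref{def:reach-rank} label by label.
\begin{compactitem}
\item (C1) is immediate.
\item For (C2), take $s=(l,\val)\in\sstates\setminus\target$ with $f_l(\val)\ge 0$ and any successor $s'=(l',\val')$. By the no-deadlock and disjoint-guard assumptions, there is a unique transition $\tau$ with $\val\in\guard_\tau$. The successor satisfies $(\val,\val')\in\update_\tau$, so the collected constraint for $\tau$ yields $f(s)-1\ge f(s')\ge 0$.
\item For (C3), the constraint for the unique enabled $\tau$ shows that $\strategy_l(\val)$ is a legal successor (since $(\val,\strategy_l(\val))\in\update_\tau$) and is a ranking successor.
\end{compactitem}
Hence $f$ is a ranking certificate. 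Moreover, $\strategy$ coincides with a strategy $\rstrat^f$ of Theorem~\ref{thm:soundness} on states with $f\ge 0$; we complete it arbitrarily on states with $f<0$, for instance by any legal successor. Theorem~\ref{thm:soundness} then shows that $\strategy$ is a memoryless winning strategy. A small point to check is that the returned polynomial $\strategy$ may fail to be a legal move when $f<0$. This does not matter, since plays from $s_\init$ following $\strategy$ never leave the region where $f\ge0$ before hitting $\target$, by (C2), (C3) and induction.

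\emph{Semi-completeness.} Suppose polynomial $f,\strategy$ exist and satisfy (C1)--(C3) in the encoded form, with $\strategy$ realizing ranking successors. Choose $D$ at least the maximal degree of $f$ and $\strategy$. The template variables can then be instantiated with their coefficients, and this instantiation satisfies the Step~2 constraints. Since $\valspace$ is compact, the left-hand side of each implication in eq.~\eqref{eq:implication} is compact: it is an intersection of $\valspace$ (or $\valspace\times\valspace$) with closed polynomial sets. Caution is needed with strict inequalities coming from $\neg\target_l$; here we would rely on the form of the translation in~\cite{asadi}, which handles this by closure or strict positivity. The completeness part of~\cite{asadi} (Putinar's Positivstellensatz under the Archimedean condition implied by compactness) then gives a solution of the existential system, for sufficiently high degree of the sum-of-squares multipliers, which we tie to $D$. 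The SMT solver, being a complete decision procedure for the existential theory of the reals, finds it. I expect this step to be the main obstacle, specifically:
\begin{compactitem}
\item ensuring the Archimedean hypothesis holds, by adding a redundant ball constraint derived from the compactness of $\valspace$;
\item handling the strict or non-closed parts of the premises;
\item coping with the fact that Putinar's theorem requires strict positivity of $Q$, whereas the constraints have non-strict $\ge$. For this I would invoke exactly the completeness statement of~\cite{asadi} as given, up to the usual strictness caveat.
\end{compactitem}

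\emph{Complexity.} For fixed $D$, the number of template variables is polynomial in the size of the game, and the Putinar translation of~\cite{asadi} with fixed multiplier degree produces a system with polynomially many variables and constraints of bounded degree. Deciding the existential theory of the reals takes time $(sd)^{O(n)}$ for $s$ constraints, degree $d$ and $n$ variables. Following~\cite{asadi}, this yields a bound that is sub-exponential in the input size, and the algorithm's runtime is dominated by this call.
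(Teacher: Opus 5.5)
Your soundness and semi-completeness parts follow the paper's proof. Soundness of the Step~3 translation gives (C1)--(C3), and Theorem~\ref{thm:soundness} then gives that $\sigma$ is winning. For semi-completeness you instantiate the templates with the given coefficients and invoke completeness of the translation of~\cite{asadi} under compactness, exactly as the paper does. Your extra observations are points the paper's proof passes over silently by citing~\cite{asadi}, so they are not a deviation. These are: the returned $\sigma$ may be an illegal move where $f<0$, which is harmless; the hypothesis must be read as $\sigma$ choosing ranking successors of $f$; and the closedness and strict-positivity caveats for Putinar's theorem.

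The genuine gap is in the complexity part. From your own premises you get only $2^{O(n\log(sd))}$, which is exponential in a polynomial of the input size. Those premises are polynomially many unknowns $n$, bounded degree, and a decision procedure running in time $(sd)^{O(n)}$. So the sentence ``following~\cite{asadi}, this yields a bound that is sub-exponential'' is not supported by anything you wrote, and in fact contradicts the bound you just stated.

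The paper uses a different ingredient. It asserts that for fixed $D$ the Step~3 output reduces in polynomial time to a quadratically-constrained quadratic program. In general this needs auxiliary unknowns for products of template coefficients, such as those arising from $f_{l'}(\sigma_l(\mathbf{x}))$. It then appeals to the sub-exponential algorithm of Grigor'ev and Vorobjov~\cite{grigor1988solving} for systems of polynomial inequalities, not to the generic $(sd)^{O(n)}$ bound. You need that reduction and that citation to close the argument as the paper does.

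Be aware, though, that the Grigor'ev--Vorobjov running time is polynomial in $M(kd)^{n^2}$, for $k$ polynomials of degree below $d$ in $n$ variables with $M$-bit coefficients. So even on that route you should state explicitly with respect to which measure of input size the bound counts as sub-exponential.
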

 The proof is provided in the Appendix.

\section{Case Studies and Experimental Results} \label{sec:experiments}

\paragraph{Implementation.} We implemented a prototype of our method, which takes as input a polynomial reachability game and tries to compute a ranking certificate and a winning strategy for $\REACH$. For the last two steps of our algorithm, we used PolyQEnt \cite{chatterjee2024polyqent} which automates translation of quantified formulas as in eq.~\eqref{eq:implication} into a system of existentially quantified polynomial constraints. Our prototype then uses Z3 \cite{MouraB08} and MathSAT5 \cite{mathsat5} as backend SMT solvers. All our experiments were done on a 11th Gen Intel i5 machine with 16 GB memory and a timeout of 5 minutes. 

\paragraph{Heuristic.} We implement the following simple heuristic, which helps improve the efficiency of our method. The heuristic applies the $\pre: \mathcal{P}(S) \rightarrow \mathcal{P}(S)$ operator once to expand the target set $\target$. For a set of states $A \subseteq \states$, the set $\pre(A)$ is the set of states that are either (i) $\REACH$ states that have at least one successor in $A$, or (ii) $\SAFE$ states whose all successors are in $A$. This is the classical one-step attractor operator, which is common in the analysis of games on graphs. We apply the $\pre$ operator once to replace $\target$ by $\pre(\target)$. Note that $\REACH$ player has a 1-step winning strategy from $\pre(\target)$, hence this heuristic is sound.

We employ this heuristic because we observed that, in many reachability games in the literature, the last step of the game is qualitatively different from earlier steps. For instance, in the last step of the Cinderella-Stepmother game in Example~\ref{example:cinderella}, the stepmother will pour 1 liter of water into the bucket that will overflow. We observed that considering $\pre(\target)$ rather than $\target$ improves the efficiency of our tool. 

\paragraph{Baselines.} We consider two state-of-the-art infinite-state reachability game solvers as baselines: (i) \texttt{GenSys} \cite{DBLP:conf/sigsoft/SamuelDK21}, which uses a classical fixed-point algorithm generalized to the infinite-state setting, and (ii) \texttt{rpgsolve} \cite{DBLP:journals/pacmpl/HeimD24}, which introduces acceleration into fixed-point computation to avoid divergence. 
We choose these since they are recent methods that outperformed previous methods in their experiments.

\paragraph{Case study: Cinderella-Stepmother game.} We experimentally evaluate our method on two variants of the Cinderella-Stepmother game. The first is the classical variant of the game~\cite{DBLP:conf/ifipTCS/BodlaenderHKSWZ12}, which was presented in Example~\ref{example:cinderella}. The second is a non-linear variant of the game in which the stepmother is allowed to distribute~$1$ liter of water across five buckets in any way that ensures $\|\mathbf{b}-\mathbf{b'}\|_2 \leq 1$, rather than $\|\mathbf{b}-\mathbf{b'}\|_1 \leq 1$ as in Example~\ref{example:cinderella}. This introduces non-linearity in the transition relation from $\texttt{SM}$ to $\texttt{C}$, which can only be expressed in polynomial arithmetic. 



In both variants of the game, the stepmother has a winning strategy for any bucket capacity $U < 2$. For example, for the first (classical) variant and for any fixed bucket capacity $U<2$, the following define a valid ranking certificate and a $\REACH$ winning strategy: 
\[
\begin{split}
f(l,\mathbf{b}) &= \begin{cases}
    \frac{U - \max(b_1,b_3)}{2-U} \times 2 & l=\texttt{SM} \\
    \frac{U - \max(b_1,b_3)}{2-U} \times 2 +1 & l=\texttt{C}
\end{cases} \\
\sigma_{\texttt{SM}}(\mathbf{b}) &= \begin{cases}
    b_1' = b_1 +1 & b_1 > U-1 \\
    b_3' = b_3 +1 & b_3 > U-1 \\
    b_1' = b_3' = \frac{b_1+b_3+1}{2} & o.w.
\end{cases} \\
\end{split}
\]

\paragraph{Experimental setup.} As discussed in the Introduction, prior methods were able to solve the classical (first) variant of the Cinderella-Stepmother game for different fixed values of the maximum bucket capacity $U < 2$. However, no prior method could compute a winning strategy for $\REACH$ player which works for any bucket capacity $2 - \epsilon$, where $\epsilon > 0$ is a symbolic variable which is provided as strategy input. 

To evaluate our method and the baselines, for both game variants we first consider different fixed values of bucket capacities $U \in \{1.5,1.7,1.9,2-10^{-10}\}$. We then consider the bucket capacity $2 - \epsilon$, where $\epsilon > 0$ is a symbolic variable. To implement this, we declare $\epsilon$ as a variable, add the constraints $\epsilon > 0$ and $U = 2 - \epsilon$ to the set of allowed valuations of variables $X$, and add $\epsilon' = \epsilon$ to each transition. This is supported by our tool and \texttt{GenSys}, but not by \texttt{rpgsolve}.\footnote{We also tried providing \texttt{rpgsolve} with a variant in which Cinderella chooses the value of $\epsilon > 0$ in the first step, but this is also not supported as the syntax of \texttt{rpgsolve} as it requires \texttt{SAFE} player to only have finitely many available actions at each step.}

\paragraph{Experimental results.} The results are shown in Table~\ref{tab:experiments-cinderella}.  
Our method is able to solve the classical (first) variant of the Cinderella-Stepmother game for all considered bucket capacities in less than 1s, including the most general case $2 - \epsilon$ where $\epsilon > 0$ is a symbolic variable. In contrast, both $\texttt{GenSys}$ and $\texttt{rpgsolve}$ fail on the largest fixed bucked capacity $2 - 10^{-10}$ and on the most general case $2 - \epsilon$. The results of our method are a significant achievement, given that automated solving of the Cinderella-Stepmother game
for the general case of bucket capacity $2 - \epsilon$ was open and this is the first time it is solved by an automated method.

Our method is also able to solve the polynomial (second) variant of the Cinderella-Stepmother game for all considered fixed bucket capacities in less than 1s, but it fails on the most general case $2 - \epsilon$. However, it significantly outperforms the baselines on this more challenging benchmark as both baselines fail to compute a winning strategy for any considered bucket capacity. This shows that our method advances the state of the art in infinite-state reachability game solving on benchmarks that were beyond the reach of existing methods.

\paragraph{Second Benchmark.} As a second benchmark, we consider a game that models a faulty robot. Intuitively, the robot is supposed to mix two chemicals to obtain a mixture that has a specific density. However, the robot is faulty and might leak some extra amounts of chemicals in each step. This can be formally modeled by a pair $(a,b)$ where $a$ is the amount of the first substance, and $b$ is the amount of the second one. In each turn, as the $\REACH$ player, the robot can add at most 1 liter from the combination of the chemicals to the mixture. On the other hand, the faultiness of the robot is modeled by an adversary that can increase each of $a$ and $b$ by $0.2$ in every other step. The objective of the robot is to make at least 10 liters of the mixture that contains at least 10\% of the first chemical, i.e. the goal is to reach $(a,b)$ such that $a>9b$ and $a+b>10$. Given that both players have infinite action-spaces, the game is not supported by \texttt{rpgoslve}. 

\paragraph{Results.} We ran our tool together with the baselines on this game. While our method solves the game in 44 seconds, \texttt{GenSys} does not terminate in 5 minutes. Once more, this example shows the efficiency of our approach for solving reachability games that are difficult to solve for other approaches. 


\begin{table}[t]
	\centering
        \footnotesize{
	\begin{tabular}{|c|c|c|c|c|}
		\hline
		                                                    & $U$          & Ours        & \texttt{GenSys} & \texttt{rpgsolve} \\
		\hline
		\multirow{4}{*}{\rotatebox{90}{\makebox[0pt]{Classical}}}             & 1.5          & 0.4        & \textbf{0.3}            & 0.4              \\
		                                                    & 1.7          & 0.4        & 0.5            & \textbf{0.3}              \\
		                                                    & 1.9          & \textbf{0.3}& 2.8            & 0.7              \\
 & $2-10^{-10}$& \textbf{0.4}& \texttt{TO}&\texttt{TO}\\
		                                                    & $2-\epsilon$ & \textbf{0.9}& \texttt{TO}      & \texttt{NS}    \\
		\hline
		\multirow{4}{*}{\rotatebox{90}{\makebox[0pt]{\small Non-linear}}} & 1.5          & \textbf{0.4}        & \texttt{TO}     & \texttt{TO}\\
		                                                    & 1.7          & \textbf{0.4}        & \texttt{TO}     & \texttt{TO}\\
		                                                    & 1.9          & \textbf{0.5}        & \texttt{TO}     & \texttt{TO}\\
 & $2-10^{-10}$& \textbf{0.6}& \texttt{TO}& \texttt{TO}\\
		                                                    & $2-\epsilon$ & \texttt{TO} & \texttt{TO}     & \texttt{NS}       \\
		\hline
	\end{tabular}
        }
	\caption{Our experimental results on two variants of the Cinderella-Stepmother game. Each cell displays the time (in seconds) it takes for a tool to solve the respective benchmark. \texttt{TO} and \texttt{NS} stand for "timeout" and "not supported".}
	\label{tab:experiments-cinderella}
\end{table}

\section{Conclusion} \label{sec:conclu}

We introduced ranking certificates for reachability games, a new sound and complete proof rule for proving existence of winning strategies of $\REACH$ player in infinite-state reachability games. We then presented an automated method for solving infinite-state polynomial reachability games using ranking certificates and template-based synthesis. Our automated method is the first to provide: (i) soundness and semi-completeness guarantees, (ii) sub-exponential runtime, and (iii) support for polynomial reachability games. Our prototype outperforms state-of-the-art tools and for the first time solves the classical Cinderella-Stepmother game with general bucket capacity of $2-\epsilon$, for any symbolic $\epsilon > 0$. Looking forward, extending our method to broader objectives (e.g., $\omega$-regular and quantitative objectives) and to non-polynomial games are interesting directions of future work.


\pagebreak
\section*{Acknowledgements}
This research was supported by Vienna Science and Technology Fund
(WWTF), State of Lower Austria [Grant ID 10.47379/ICT25017], ERC CoG 863818 (ForM-SMArt), and Austrian Science Fund (FWF) 10.55776/COE12.
\bibliographystyle{named}
\bibliography{bibliography}

\pagebreak
\onecolumn
\twocolumn
\appendix

\section{Other Related Works}
Another line of research focuses on abstraction refinement techniques~\cite{DBLP:conf/lics/BallK06,DBLP:journals/iandc/GrumbergLLS07,DBLP:conf/icalp/HenzingerJM03}. A common approach is counter example guided synthesis where based on counter examples, current abstraction gets refined. These algorithms are not complete for systems with infinite number of bistimulation classes and their refinement may not terminate if the strategy needs unbounded loops.
The dual problem of safety has been also considered~\cite{DBLP:conf/aplas/MarkgrafHLNN20,DBLP:conf/tacas/KatisFGGBGW18}. None of these methods provide general completeness, so, are not applicable to solve a reachability games.
Finally, There are works that consider just one type of player for reachability and LTL synthesis~\cite{asadi,DBLP:conf/fm/ChatterjeeGGKZ24}, i.e., all the actions are picked by either $\SAFE$ or $\REACH$.

Specifically, we discuss both positive and negative aspects of the comparison of our method against~\cite{FarzanK18} and~\cite{DBLP:journals/pacmpl/HeimD24}. On the positive side, our method is the first one which (i) is applicable to non-linear games (unlike~\cite{FarzanK18}), and (ii) provides completeness guarantees (unlike~\cite{DBLP:journals/pacmpl/HeimD24}). Moreover, our experiments show that our method outperforms~\cite{DBLP:journals/pacmpl/HeimD24} on both linear and non-linear settings.

On the negative side, the methods of~\cite{DBLP:journals/pacmpl/HeimD24} and~\cite{FarzanK18} can in principle produce {\em piecewise strategies} for the REACH player, whereas our template-based synthesis method produces strategies represented via a single polynomial expression. While piecewise linear functions over any compact domain can be approximated via polynomial functions up to arbitrary precision due to the Stone-Weierstrasss theorem, in practice this may require using large polynomial degrees. So we imagine there would be examples where this could be an issue for our method, although we did not observe this in our benchmarks.

\section{Example: Compact Action Space Without a Ranking Certificate}
\label{app:compact-example}

The completeness result of Theorem~\ref{thm:completeness} requires that $\SAFE$ has only finitely many choices at each step. The following example illustrates that this assumption cannot be relaxed to a compact (but infinite) action space: even when $\SAFE$'s action space is compact, $\REACH$ may have a winning strategy and yet no ranking certificate exists.

\begin{example} \label{ex:compact-example}
    Consider a game with two real-valued variables $(x, y)$ initialized to $(0, 1)$.
    At each step, $\SAFE$ chooses a value for $x$ from the closed interval $[0,1]$.
    \begin{itemize}
        \item If $\SAFE$ chooses $x = 0$, the game ends immediately and $\REACH$ wins.
        \item Otherwise, the update $y' = y - x$ is applied repeatedly as long as $y \geq 0$, and $\REACH$ wins when $y < 0$ is first reached.
    \end{itemize}
    No matter which value $x \in [0,1]$ the $\SAFE$ player picks (as long as $x > 0$), $y$ decreases by $x$ at every step, so $\REACH$ eventually wins.
    Hence, $\REACH$ has a winning strategy.
    However, for any fixed $\varepsilon > 0$, $\SAFE$ can choose $x = \varepsilon$, forcing the game to last for $\lceil 1/\varepsilon \rceil$ steps before $\REACH$ wins.
    Since $\SAFE$ can make $\varepsilon$ arbitrarily small, there is no finite upper bound on the number of steps required for $\REACH$ to win.
    Therefore, no ranking certificate---which by definition provides such a finite bound---can exist for this game, even though $\REACH$ has a winning strategy and $\SAFE$'s action space $[0,1]$ is compact.
\end{example}

\begin{figure}[h]
    \centering
    \includegraphics[width=0.75\columnwidth]{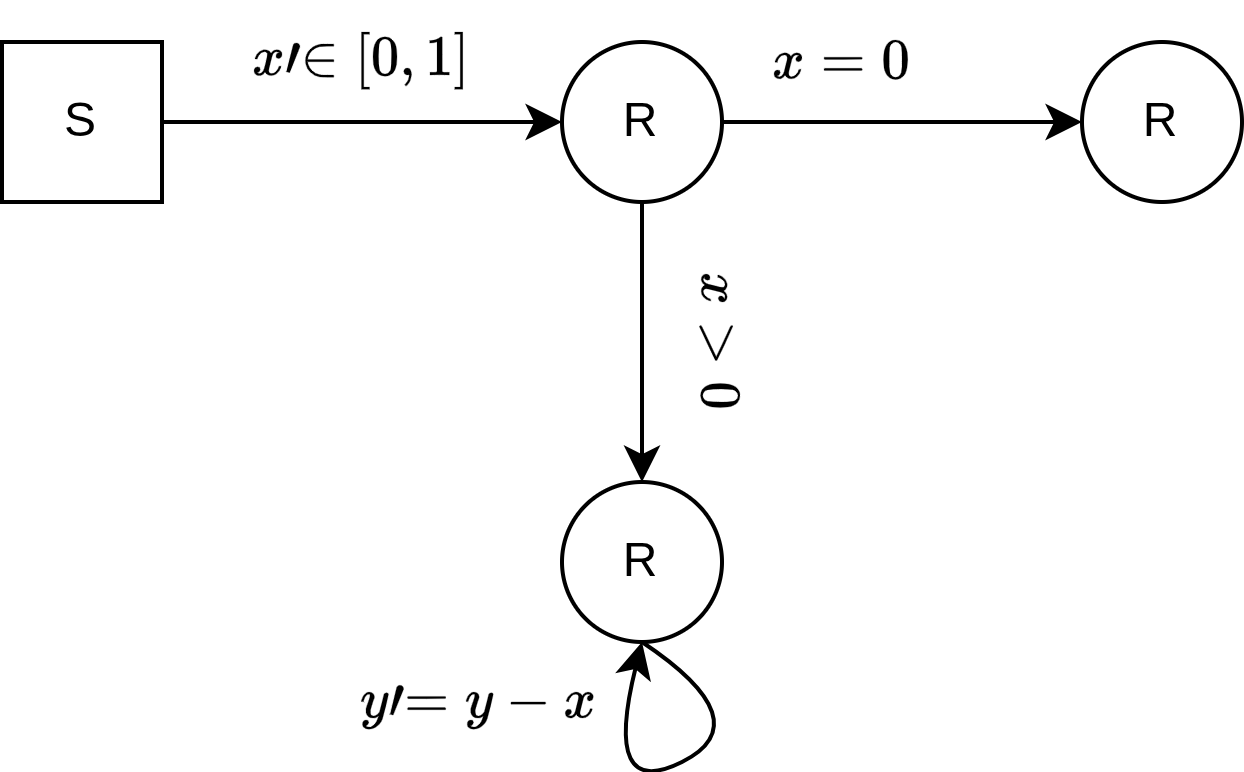}
    \caption{Illustration of the game from Example~\ref{ex:compact-example}. $\SAFE$ picks $x \in [0,1]$; choosing $x=0$ is an immediate win for $\REACH$, while any $x>0$ leads to the update $y' = y - x$, which eventually drives $y$ below $0$ (winning for $\REACH$) after an unbounded number of steps.}
    \label{fig:compact-example}
\end{figure}

\section{Proof of Theorem \ref{thm:soundness}} \label{app:proof-rule-sound}
\begin{theorem*}[Soundness]
    Suppose that there exists a ranking certificate $f$ in a reachability game $\game$ with a set of target states $\target \subseteq \states$. Then, there exists a memoryless winning strategy for $\REACH$ player.

    An example of such a strategy is a strategy $\rstrat^f$ that maps each state $s \in \rstates$ to a ranking successor of $s$ if $f(s) \geq 0$, and to an arbitrary successor of $s$ if $f(s) < 0$.
\end{theorem*}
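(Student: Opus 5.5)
We show that the strategy $\rstrat^f$ described in the statement is well defined, memoryless and winning. It is well defined because every state has at least one successor (no deadlocks). For every $s \in \rstates \setminus \target$ with $f(s) \geq 0$, condition (C3) guarantees that a ranking successor exists, and we fix one by choice. For the remaining $\REACH$ states, including those in $\target$, we pick an arbitrary successor. Since $\rstrat^f$ depends only on the current state, it is memoryless.

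Next we fix an arbitrary strategy $\sigma_\SAFE$ of $\SAFE$ and let $\rho = (s_0, s_1, \dots) = \rho(\rstrat^f, \sigma_\SAFE)$. Suppose, for contradiction, that $s_i \notin \target$ for every $i \in \N$. We prove by induction on $i$ that
\[
f(s_i) \geq 0 \quad\text{and}\quad f(s_i) \leq f(s_0) - i .
\]
The base case $i=0$ is exactly (C1), since $s_0 = s_\init$.

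For the inductive step, assume $f(s_i) \geq 0$ and $s_i \notin \target$, and split on who owns $s_i$.
\begin{compactitem}
\item If $s_i \in \sstates$, then $s_{i+1} \in \succ(s_i)$, whichever successor $\sigma_\SAFE$ chooses. Condition (C2) then gives $f(s_i) - 1 \geq f(s_{i+1}) \geq 0$.
\item If $s_i \in \rstates$, then $s_{i+1} = \rstrat^f(\rho_{\le i})$ is the chosen ranking successor of $s_i$, because $f(s_i)\geq 0$. By definition of a ranking successor, $f(s_i) - 1 \geq f(s_{i+1}) \geq 0$.
\end{compactitem}
In both cases the induction goes through.

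For $i > f(s_0)$, the two invariants give $0 \leq f(s_i) \leq f(s_0) - i < 0$, which is a contradiction. Hence $\rho$ visits $\target$, so it is winning for $\REACH$. Since $\sigma_\SAFE$ was arbitrary, $\rstrat^f$ is a winning memoryless strategy.

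The only delicate point is keeping the invariant $f(s_i) \geq 0$ throughout, because (C2) and (C3) apply only at non-target states with non-negative $f$. This is why the conditions require $f(s') \geq 0$ for the successor, and not just a decrease. We also use the convention that the play is winning as soon as it touches $\target$, so reaching a target state stops the argument and needs no further invariant. Memorylessness of $\SAFE$ plays no role, so the argument covers arbitrary history-dependent opponent strategies.
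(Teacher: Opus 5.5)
Your proof is correct and follows essentially the same argument as the paper. By (C1)--(C3), $f$ stays non-negative and drops by at least $1$ at every non-target step, whoever moves, so the play must hit $\target$ within about $f(s_\init)$ steps. Your explicit induction, and your observation that $\REACH$ states in $\target$ may lack a ranking successor and must be mapped arbitrarily, make the argument slightly more careful than the paper's informal version.
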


\begin{proof}
    Let $\rstrat^f$ be a memoryless strategy described in the theorem statement. To prove the theorem, it suffices to prove that $\rstrat^f$ is a winning strategy for $\REACH$. In particular, we need to show that for every strategy $\sigma$ of $\SAFE$, the play $\rho(\rstrat^f,\sigma)$ is winning.

    To prove this, we fix a strategy $\sigma$ of $\SAFE$ and prove that the play $\rho(\rstrat^f,\sigma)$ eventually visits a state in the target set $\target$. By condition (C1) in Definition~\ref{def:reach-rank}, the value of $f$ at the initial state of the play is non-negative. Then, condition (C2) ensures that no matter what the player $\SAFE$ does, if the play visits a non-target state that belongs to $\SAFE$ at which the value of $f$ is non-negative, then the value of $f$ will decrease by at least $1$ in the next state of the play while remaining non-negative. Finally, condition (C3) ensures that if the play visits a non-target state that belongs to $\REACH$ at which the value of $f$ is non-negative, then the value of $f$ at the ranking successor state of the play chosen by the strategy $\rstrat^f$ will decrease by at least $1$ in the next state of the play while remaining non-negative. Hence, the value of $f$ along the states of the play has to remain non-negative and to decrease by at least $1$ at every step, until a target set is visited. Thus, the target set must be visited in at most $\lceil f(s_\init) \rceil$ steps. This concludes our proof.
\end{proof}

\section{Proof of Theorem \ref{thm:completeness}} \label{app:proof-rule-thm}
\begin{theorem*}[Completeness for finite $\SAFE$ choices]
    Let $\game$ be a reachability game with a set of target states $\target \subseteq \states$. Suppose that $\SAFE$ has only finitely many choices at each step, i.e. for every state $s \in \sstates$ the set of successor states $\succ(s)$ is finite. If $\REACH$ has a winning strategy, then there exists a ranking certificate for this reachability game.
\end{theorem*}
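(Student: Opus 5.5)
The plan is to take $f$ to be the classical attractor rank: the least number of steps in which $\REACH$ can force a visit to $\target$. Every state outside $\REACH$'s winning region gets the value $-1$. The crux is that finite branching of $\SAFE$ guarantees this rank is a finite natural number at every winning state. Without that assumption the rank can be transfinite, as Example~\ref{example:incomplete} shows.

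Define the attractor levels inductively by $A_0 = \target$ and $A_{n+1} = A_n \cup \pre(A_n)$. Here $\pre$ is the one-step attractor operator described in Section~\ref{sec:experiments}:
\begin{itemize}
\item a state of $\REACH$ is in $\pre(A)$ if it has some successor in $A$;
\item a state of $\SAFE$ is in $\pre(A)$ if all its successors are in $A$.
\end{itemize}
Let $A_\omega = \bigcup_{n \in \N} A_n$. For $s \in A_\omega$, let $r(s)$ be the least $n$ with $s \in A_n$.

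The first and main step is to show that $A_\omega$ is closed under $\pre$, i.e.\ $\pre(A_\omega) \subseteq A_\omega$.
\begin{itemize}
\item Take a $\REACH$ state with a successor $s' \in A_\omega$. It lies in $A_{r(s')+1}$.
\item Take a $\SAFE$ state $s$ all of whose successors lie in $A_\omega$. Since $\succ(s)$ is finite and nonempty (there are no deadlocks), $m = \max_{s' \in \succ(s)} r(s')$ is a finite natural number, and $s \in A_{m+1}$.
\end{itemize}
The second case is exactly where the hypothesis is used. With infinitely many successors the maximum could be unbounded, and the level would have to be $\omega+1$.

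Second, show that every state from which $\REACH$ has a winning strategy lies in $A_\omega$. The contrapositive gives a strategy for $\SAFE$ from any $s \notin A_\omega$:
\begin{itemize}
\item such a state is not in $\target$, since $\target = A_0$;
\item if $s$ belongs to $\SAFE$, closure under $\pre$ gives a successor outside $A_\omega$, which $\SAFE$ chooses;
\item if $s$ belongs to $\REACH$, closure under $\pre$ means all successors are outside $A_\omega$.
\end{itemize}
Hence $\SAFE$ keeps the play outside $A_\omega \supseteq \target$ forever. So $\REACH$ does not win from $s$ against this strategy, and $s$ is not winning for $\REACH$. Applied to the initial state, which is winning by assumption, this gives $s_\init \in A_\omega$.

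Finally, define $f(s) = r(s)$ for $s \in A_\omega$ and $f(s) = -1$ otherwise, and check the three conditions.
\begin{itemize}
\item (C1) holds because $s_\init \in A_\omega$, so $f(s_\init) = r(s_\init) \geq 0$.
\item For (C2), let $s \in \sstates \setminus \target$ with $f(s) \geq 0$. Then $s \in A_\omega$ and $r(s) = n \geq 1$, so $s \in \pre(A_{n-1})$. Hence every successor $s'$ lies in $A_{n-1}$, giving $0 \leq f(s') \leq n-1 = f(s)-1$.
\item For (C3), the same reasoning for $s \in \rstates \setminus \target$ yields some successor in $A_{n-1}$. That successor is a ranking successor of $s$.
\end{itemize}
The only delicate point is the finiteness of $r$ at $\SAFE$ states established in the first step; the rest is routine.
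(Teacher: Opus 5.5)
Your proof is correct, but it takes a different route from the paper. The paper fixes a memoryless winning strategy $\rstrat$, relying on memoryless determinacy. It restricts attention to states visited by $\rstrat$-conforming plays and defines $f$ recursively: $f(s)=f(\rstrat(s))+1$ at $\REACH$ states and $f(s)=\sup_{s'\in\succ(s)}f(s')+1$ at $\SAFE$ states. This is essentially the height of the tree of $\rstrat$-conforming plays until $\target$ is hit. Finiteness is then argued K\"onig-style: repeatedly follow a $\SAFE$ successor of infinite value to build a $\rstrat$-conforming play that never visits $\target$. You instead use the strategy-independent attractor rank. The finite-branching hypothesis enters through your lemma that $A_\omega$ is already closed under $\pre$, after which a $\SAFE$ counter-strategy places $s_\init$ in $A_\omega$. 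Your version gains several things:
\begin{itemize}
\item it starts from an arbitrary, possibly history-dependent, winning strategy, with no appeal to memoryless strategies;
\item $f$ is well defined by construction, rather than as a solution of a circular recursive equation;
\item it avoids having to exclude states that are reached under $\rstrat$ only after $\target$ has been visited, where $\rstrat$ need not make progress and the recursive value can be infinite;
\item the resulting certificate is exactly the classical minimal alternating distance.
\end{itemize}
In exchange, the paper's construction ties the certificate to the given strategy, so that $\rstrat(s)$ itself is a ranking successor at every relevant $\REACH$ state.
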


\begin{proof} Let $\rstrat$ be a memoryless winning strategy of $\REACH$.
    A state $s \in \states$ is called $\rstrat$-\connected if there is a run of $\game$ that conforms to $\rstrat$ and contains $s$. Let $\states_{\rstrat}$ be the set of all $\rstrat$-\connected states of $\game$. We define the function $f\colon \states \to \R$ as follows:
    \[
        f(s) = \begin{cases}
            -1                                    & s \notin \states_{\rstrat}                              \\
            0                                     & s \in \states_{\rstrat} \cap \target                    \\
            f(\rstrat(s))+1                       & s \in \states_{\rstrat} \cap \rstates\setminus\target   \\
            \sup\limits_{s' \in \succ(s)} f(s')+1 & s \in \states_{\rstrat} \cap \sstates \setminus \target
        \end{cases}
    \]
    We show that the value of $f$ is always finite and that it is a $\REACH$-ranking function for $\game$. Suppose, for the sake of contradiction, that $f(s) = \infty$ for some state $s$. It immediately follows that $f(s_\init) = \infty$. Consider the run $\rho = (s_0, s_1, \dots)$ as follows: (i) $s_0 = s_\init$, (ii) if $s_i \in \rstates$, then $s_{i+1} = \rstrat(s_i)$, and (iii) if $s_i \in \sstates$, then $s_{i+1}$ is one of the successors of $s_i$ whose $f$-value is not well-defined. Such $s_{i+1}$ always exists since $f(s_i)$ is the supremum of $f$-values of its successors plus one and each $\succ(s_i)$ is assumed to be finite for each $s_i \in \sstates$. By construction, for each $i$ it holds that $f(s_i) = \infty$, hence $s_i \notin \target$ which means $\rho$ is a play of $\game$ that conforms to $\rstrat$ and is not winning for $\REACH$. This is a contradiction with the definition of $\REACH$-winning strategies.

    Next we show that $f$ is a $\REACH$-ranking function by Def.~\ref{def:reach-rank}. Firstly, note that for each $s \in \states_{\rstrat}$ it holds by construction that $f(s) \geq 0$. We show that the three conditions in Def.~\ref{def:reach-rank} holds as follows:
    \begin{enumerate}[1.]
        \item $f(s_\init) \geq 0$ since $s_\init \in \states_{\rstrat}$.
        \item For each $s \in \states_{\rstrat} \cap \rstates \setminus \target$ it holds by construction that $f(s) - 1 = f(\rstrat(s)) \geq 0$.
        \item For each $s \in \states_{\rstrat} \cap \sstates \setminus \target$ it holds by construction that $f(s) -1 = \sup\limits_{s'' \in \succ(s)} f(s'') \geq f(s') \geq 0$, for all $s' \in \succ(s)$.
    \end{enumerate}
    It follows that $f$ is a $\REACH$-ranking function for $\game$.
\end{proof}

\section{Proof of Theorem~\ref{thm:algo}}
\begin{theorem*}
    Consider a polynomial reachability game $\game = (\vars, \valspace, \valinit, \rlabels, \slabels, \labelinit, \trans)$ with target states $\target$:
    \begin{compactenum}[1.]
        \item {\em (Soundness)} If the algorithm returns a solution $(f,\strategy)$, then $f$ is a valid ranking certificate for this reachability game and $\strategy$ is a memoryless winning strategy of $\REACH$.
        \item {\em (Semi-completeness)} Whenever the set of possible valuations $\valspace$ is compact, if there exists a polynomial ranking certificate $f$ and a polynomial memoryless winning strategy $\strategy$ for this reachability game then, for a sufficiently high value of the maximal polynomial degree parameter $D$, the algorithm is guaranteed to compute them. 
        \item {\em (Complexity)} The runtime of the algorithm is sub-exponential in the size of the input game $\game$ and $\target$, for each fixed value of the maximal polynomial degree $D$.
    \end{compactenum}
\end{theorem*}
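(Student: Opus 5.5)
The proof treats the three claims separately, and each reduces to an earlier result or to a property of the translation of~\cite{asadi}.

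\emph{Soundness.} Suppose the algorithm returns concrete template values. These values satisfy the existential system produced in Step~3. By soundness of the Putinar/Farkas translation~\cite{asadi}, they also satisfy the universally quantified constraints (C1)--(C3) collected in Step~2. We then check that these constraints imply the conditions of Definition~\ref{def:reach-rank}:
\begin{compactitem}
\item (C1) is literally $f_{\labelinit}(\valinit)\ge 0$.
\item For (C2), take any $\SAFE$ state $(l,\val)\notin\target$ with $f_l(\val)\ge0$. By our no-deadlock and disjoint-guards assumptions, $\val$ lies in the guard of exactly one outgoing transition $\tau$. Every successor is then $(l',\val')$ with $(\val,\val')\in\update_\tau$, and the collected constraint for $\tau$ gives $f_l(\val)-1\ge f_{l'}(\val')\ge0$.
\item For (C3), the same reasoning applies to the unique enabled transition. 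The constraint shows that $(l',\strategy_l(\val))$ is a genuine successor and a ranking successor.
\end{compactitem}
Hence $f$ is a ranking certificate. Moreover, $\strategy$ maps every non-target $\REACH$ state with $f\ge0$ to a ranking successor, so it is an instance of $\rstrat^f$ (arbitrary elsewhere). Theorem~\ref{thm:soundness} then shows that $\strategy$ is a memoryless winning strategy.

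\emph{Semi-completeness.} Assume a polynomial certificate $f$ and polynomial strategy $\strategy$ exist, with a chosen $\strategy$ satisfying the Step~2 constraints. Let $D_0$ be the maximum of their degrees. For every $D\ge D_0$, their coefficients instantiate the templates of Step~1, with higher-degree coefficients set to zero, and they satisfy every Step~2 constraint. Since $\valspace$ is compact, the left-hand side of each implication~\eqref{eq:implication} is a compact semialgebraic set, being a closed subset of $\valspace$ or $\valspace\times\valspace$. In that case the translation of~\cite{asadi} is complete: there exist sums-of-squares multipliers of some degree witnessing each implication. Taking $D$ at least the degree bound needed for these multipliers, the existential system is satisfiable. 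The complete real-arithmetic decision procedure used in Step~4 then finds a solution, and by the soundness part that solution is a valid pair.

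The main obstacle I expect is here. Putinar's Positivstellensatz needs strict positivity and an Archimedean description. The strict decrease by~$1$ helps with the first requirement. For the second, I would add the redundant ball constraint $R-\|\val\|^2\ge0$, available by compactness. I would also interpret the non-strict and target-complement inequalities in the way required by the equisatisfiability statement of~\cite{asadi}, and rely on that statement directly rather than reproving it.

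\emph{Complexity.} For fixed $D$ the number of monomials is $\binom{|\vars|+D}{D}$, which is polynomial in $|\vars|$. So the number of template variables, including the Putinar multiplier coefficients, is polynomial in the size of $\game$ and $\target$. The Step~3 translation therefore produces an existential system of polynomial size~\cite{asadi}. Its satisfiability can be decided in PSPACE via the existential theory of the reals, and in time $(sd)^{O(n)}$ for $n$ variables, $s$ constraints and degree $d$. To obtain the stated sub-exponential bound in the input size, I would follow the counting of~\cite{asadi}.
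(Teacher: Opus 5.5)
Your soundness and semi-completeness arguments are essentially the paper's. Soundness uses soundness of the Step~3 translation plus Theorem~\ref{thm:soundness}. Semi-completeness uses compactness, completeness of the translation of~\cite{asadi}, and a complete solver in Step~4. You are more careful than the paper in requiring that the given $\strategy$ select ranking successors of $f$: a polynomial certificate together with an unrelated polynomial winning strategy need not satisfy the Step~2 constraints.

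Two of your side justifications are inaccurate, although the paper skips the same points by appealing to~\cite{asadi}:
\begin{itemize}
\item The left-hand side of an implication need not be closed, hence need not be compact, even when $\valspace$ is. For example, $\neg\target_l$ is open when the target is given by a non-strict inequality, and guards or updates may be strict.
\item The unit decrease does not give the strict positivity Putinar needs. Replacing $f$ by $2f$ does make the decrease conjunct at least $1$, because the premise $f_l(\val)\ge 0$ is scale-invariant. However, the conjunct $f_{l'}(\val')\ge 0$ can be tight, and so can any equality in $\update_\tau$ in (C3).
\end{itemize}

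The genuine gap is the complexity part. The only bound you derive, $(sd)^{O(n)}$ for $n$ unknowns, is exponential in $n$. Moreover, $n$ grows polynomially with the game: for $D=1$ the strategy templates alone have $|\vars|(|\vars|+1)$ coefficients per label. PSPACE membership does not give sub-exponential time either. Your sentence ``follow the counting of~\cite{asadi}'' defers exactly the step the claim needs.

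The paper fills this step with a specific reduction. For fixed $D$, the Step~3 system is turned in polynomial time into a quadratically constrained quadratic program. Products of unknowns, e.g.\ multiplier coefficients times template coefficients, are replaced by fresh variables until every constraint has degree at most two. The paper then invokes the algorithm of~\cite{grigor1988solving}, which it takes to be sub-exponential.

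Since that citation carries the whole claim, your proof needs at least this reduction. The cited running time is polynomial in $M(kd)^{n^2}$, for $k$ constraints of degree below $d$ with $M$-bit coefficients. You should therefore also state explicitly with respect to which size measure the resulting bound is sub-exponential.
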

\begin{proof}
    {\em (Soundness)} Suppose the algorithm returns $(f,\strategy)$. Then, given the soundness of the translation done in Step~3, the pair $(f,\strategy)$ satisfies the constraints collected in Step~2. This means that conditions (C1), (C2) and (C3) are all satisfied by $(f,\strategy)$ which makes $f$ a $\REACH$-ranking function and $\strategy$ is a strategy for $\REACH$ that always choses a ranking successor, hence it is a winning strategy by Theorem~\ref{thm:soundness}.

    {\em (Semi-completeness)} The system of constraints collected up to Step~2 of the algorithm is satisfiable if and only if a pair $(f,\strategy)$ exists that satisfies the conditions in Definition~\ref{def:reach-rank}. Given that $\valspace$ is a compact set, it follows that the translation via Farkas' and Putinar's theorem in Step~3 is sound and complete for a fixed polynomial degree $D$~\cite{asadi}. Finally, the constraint solving of Step 4 is sound and complete, implying the semi-completeness claim.

        {\em (Complexity)} For any fixed degree $D$ of the template polynomials, our algorithm reduces the problem of synthesizing polynomial $(f,\strategy)$ pairs to Quadratically-constrained Quadratic Programming (QP) in polynomial time. It is well-known that QP is solvable in sub-exponential time \cite{grigor1988solving}. Hence, our algorithm runs in sub-exponential time as well.
\end{proof}

\section{Faulty Robot Benchmark: Full Description and Results}
\label{app:faulty-robot}

As a second benchmark, we consider a reachability game that models a faulty chemical-mixing robot. The robot is supposed to mix two chemicals to produce a mixture with a specific concentration. However, due to hardware faults (modeled as an adversary), the robot may suffer leaks that add extra amounts of chemicals at each step.

\paragraph{Formal game definition.}
The game is defined as the tuple $\game = (\vars, \valspace, \valinit, \rlabels, \slabels, \labelinit, \trans)$ where:
\begin{compactitem}
    \item $\vars = \{a, b\}$, where $a \geq 0$ denotes the accumulated amount of the first chemical and $b \geq 0$ denotes the accumulated amount of the second chemical in the mixture.
    \item $\valspace = \{(a,b) \mid a \geq 0,\, b \geq 0\}$.
    \item $\valinit = (0, 0)$, i.e., the mixture is initially empty.
    \item $\rlabels = \{\texttt{Robot}\}$ and $\slabels = \{\texttt{Fault}\}$ are the sets of $\REACH$ and $\SAFE$ labels, respectively.
    \item $\labelinit = \texttt{Robot}$, i.e., the robot acts first.
    \item $\trans$ consists of two transitions:
    \begin{compactenum}[1.]
        \item $\tau_R = (\texttt{Robot},\, \texttt{Fault},\, \valspace,\, \update_R)$, where the update relation is
        \begin{multline*}
            \update_R = \{((a,b),(a',b')) \mid \delta_a, \delta_b \geq 0,\; \delta_a + \delta_b \leq 1,\\
            a' = a + \delta_a,\; b' = b + \delta_b\}.
        \end{multline*}
        The robot ($\REACH$) chooses any non-negative split $(\delta_a, \delta_b)$ with $\delta_a + \delta_b \leq 1$ to add to the mixture.
        \item $\tau_F = (\texttt{Fault},\, \texttt{Robot},\, \valspace,\, \update_F)$, where the update relation is
        \begin{multline*}
            \update_F = \{((a,b),(a',b')) \mid 0 \leq \varepsilon_a \leq 0.2,\; 0 \leq \varepsilon_b \leq 0.2,\\
            a' = a + \varepsilon_a,\; b' = b + \varepsilon_b\}.
        \end{multline*}
        The adversary ($\SAFE$) models a fault that independently leaks up to $0.2$ units of each chemical into the mixture.
    \end{compactenum}
\end{compactitem}

\paragraph{Objective.}
The target set is $\target = \{(\texttt{Fault},\, (a,b)) \mid a > 9b \text{ and } a + b > 10\}$.
That is, $\REACH$ wins as soon as, after the robot's action, the mixture exceeds 10 liters in total volume and the first chemical accounts for more than $90\%$ of the mixture (i.e., $a/(a+b) > 9/10$).

\end{document}